\documentclass[11pt]{article}

\usepackage{booktabs} % For formal tables
\usepackage{natbib}
\bibliographystyle{abbrvnat}
\usepackage{xfrac}
\usepackage{algorithm}
\usepackage{amsmath,amsthm,amsfonts}
\usepackage{amsmath}
\usepackage{color}
\usepackage{mathrsfs}
\usepackage{verbatim}
\usepackage{enumitem}
\usepackage{bm}
\usepackage{hyperref}
\usepackage{xr}
\usepackage{tikz}
\usetikzlibrary{positioning}
\usepackage{subcaption}
\usepackage{float}
\usepackage{listings}
\usepackage{array}
\usepackage{makecell}
% Add [suppress] to suppress the colors and remove the deletes
\usepackage{mathtools}
\usepackage[suppress]{color-edits}
\addauthor{ll}{blue}
\addauthor{kl}{blue}
\addauthor{jk}{blue}
\addauthor{sb}{blue}

\definecolor{DarkBlue}{rgb}{0.1,0.1,0.5}
\hypersetup{
	colorlinks=true,       % false: boxed links; true: colored links
	linkcolor=DarkBlue,          % color of internal links
	citecolor=DarkBlue,        % color of links to bibliography
	filecolor=DarkBlue,      % color of file links
	urlcolor=DarkBlue,          % color of external links
	pdftitle={},
	pdfauthor={},
}

\usepackage{fancyhdr, lastpage}
\usepackage[vmargin=1.00in,hmargin=1.00in,centering,letterpaper]{geometry}

% !TeX root = main.tex 

\newcommand{\todo}{\textcolor[rgb]{1,0,0.5}{To do: }\textcolor[rgb]{0.5,0,1}}

\newcommand{\piv}{\mathrm{piv}}
\newcommand{\pivotal}{\mathrm{pivotal}}
\newcommand{\isnot}{\mathrm{not}}
\newcommand{\pred}{^\mathrm{predict}}
\newcommand{\act}{^\mathrm{act}}

\newcommand{\calM}{\mathcal{M}}

%{w_{\mathfrak{a}}}

\newcommand{\argmax}{\mathop{\rm argmax}}
\newcommand{\Ind}[1]{\mathbf{1}\{#1\}}

\renewcommand{\Pr}{\mathbb{P}}

\newcommand{\E}{\mathbb{E}}

\newcommand{\calF}{\mathcal{F}}

\newcommand{\calS}{\mathcal{S}}

\newcommand{\calD}{\mathcal{D}}

\newcommand{\calA}{\mathcal{A}}

\newtheorem{theorem}{Theorem}[section]
\newtheorem{definition}{Definition}
\newtheorem{proposition}[theorem]{Proposition}

\begin{document}

\title{On the Actionability of Outcome Prediction}
%\author{}

% \author{
% 	Lydia T. Liu\thanks{Correspondence to: lydiatliu@cornell.edu}\\
% 	Cornell University
% 	\and Solon Barocas\\
% Cornell University
% 	\and Jon Kleinberg \\
% 	Cornell University 
% 	\and Karen Levy \\
% 	Cornell University  }
\author{
Lydia T. Liu\thanks{Correspondence to: lydiatliu@cornell.edu}~\thanks{Cornell University}
\and Solon Barocas\footnotemark[2]~\thanks{Microsoft Research}
\and Jon Kleinberg\footnotemark[2]
\and Karen Levy\footnotemark[2]}

\date{}

\maketitle

\begin{abstract}

Predicting future outcomes is a prevalent application of machine learning in social impact domains. Examples range from predicting student success in education to predicting disease risk in healthcare. Practitioners recognize that the ultimate goal is not just to predict but to act effectively. Increasing evidence suggests that relying on outcome predictions for downstream interventions may not have desired results. 

In most domains there exists a multitude of possible interventions for each individual, making the challenge of taking effective action more acute. 
Even when causal mechanisms connecting the individual's latent states to outcomes is well understood, in any given instance (a specific student or patient), practitioners still need to infer---from budgeted measurements of latent states---which of many possible interventions will be most effective for this individual. With this in mind, we ask: when are accurate predictors of outcomes helpful for identifying the most suitable intervention? 

Through a simple model encompassing actions, latent states, and measurements, we demonstrate that pure outcome prediction rarely results in the most effective policy for taking actions, even when combined with other measurements. 
We find that except in cases where there is a single decisive action for improving the outcome, outcome prediction never maximizes ``action value'', the utility of taking actions.
Making measurements of actionable latent states, where specific actions lead to desired outcomes, considerably enhances the action value compared to outcome prediction, and the degree of improvement depends on action costs and the outcome model. This theoretical analysis underscores the importance of moving beyond generic outcome prediction in policy settings, and of incorporating knowledge of plausible actions and latent states in predictive models to better guide targeted interventions.

\end{abstract}

\section{Introduction}

Artificial intelligence has been used for impact in variety of societal domains, from education to healthcare \citep{shi20survey,tomavsev2020ai}. While many of its applications have focused on \emph{prediction}, such as that of educational outcomes \citep{tamhane2014predicting,lakkaraju2015machine,xu2017progressive} and medical incidents and risk \citep{hosseinzadeh2013assessing, ma2018risk, ballinger2018deepheart,Optum}, practitioners and researchers invariably encounter the question of how to use these predictions for interventions to improve the outcomes that they care about.

Consider the example of predicting student academic performance at the secondary level. In most cases, the goal of building such predictors is to improve the relevant educational \emph{outcome}, academic performance. However, a prediction of a student's future academic performance alone does not improve academic performance unless there is an intervening \emph{action}, such as providing additional tutoring or providing financial support. Students who lack the necessary academic prerequisites may need additional tutoring rather financial support to improve their performance, whereas students who lack the time to complete course work because they are working multiple jobs may need financial aid rather than to be referred for additional tutoring. Therefore, the success of any action depends on a student's \emph{latent state} (so called as we do not know a priori whether the student lacks prerequisites or income). A school official may take \emph{measurements}, such as diagnostic tests, past grades, income survey, to obtain information about the students' latent states. These measurements can be costly, requiring time and labor. Therefore, key questions for the school official include: what should they measure in order to best \emph{predict} the student's future academic performance, and what should they measure in order to best \emph{improve} it? Further, when are these the same measurements, and when are they different?

Many applications of ML/AI for social impact focus on solving prediction problems \citep{kleinberg2015prediction,athey2017beyond} and maximizing prediction accuracy for future outcomes.
Although risk prediction has become ubiquitous in education and other domains, its effectiveness for improving outcomes has been called into question. A recent empirical qualitative study by \citet{liu2023reimagining} on machine learning applications in education found a significant gap between predictions and beneficent interventions.

\begin{quote}
	\emph{
	``You don't improve things by predicting them better." - Education researcher on the value of predicting academic risk} \citep{liu2023reimagining}
\end{quote}

The study of interventions has been fundamental in the social sciences, statistics, and theoretical computer science \citep{rosenbaum1983central, pearl1995causal, rubin2005causal,peters2017elements, hofman2021integrating}. The set of techniques and applications for causal inference and analysis are vast, mostly notably including program evaluation and randomized controlled trials \citep{Stephenson98RCT,deaton2018understanding}, observational studies \citep{rosenbaum2010design} using modern ML techniques \citep{athey2016recursive}, adaptive trial designs \citep{collins2007multiphase,montoya2022efficient}, individual treatment effect and counterfactual inference \citep{shalit2017estimating,lei2021conformal,Bynum_Loftus_Stoyanovich_2023}. %Causal inference address problems of the kind: whether a particular intervention has an effect on a population.
Prior work informed by causal inference has discussed the gap between predictions and decisions \cite{athey2017beyond}, and how the use of prediction in these cases is predicated upon critical causal assumptions \cite{prosperi2020causal,lundberg2022}.  In the specific application domain of education, despite the prevalence of RCTs and causal analysis on the population impact of interventions \citep{cook2014surprising, yeager2019national}, instance-level targeting and decision making in schools are still often driven by risk scores that predict academic outcomes without incorporating knowledge of plausible interventions \citep{bruce2011track, knowles2015needles, perdomo2023difficult}.

The current work is interested in a question that is at the intersection of the pure prediction and the causal intervention paradigms: when is outcome prediction helpful for interventions at the instance level? Given the ubiquity of predictive tools and significant data infrastructure built around prediction, there is a  need to better understand the limits of predictions when applying them to interventional settings. This work acknowledges the key role of causal inference, while studying a problem at a different scope---that of instance-level predictive intervention, e.g. what helps this patient, what helps this student, assuming that a model of causal effects is available. Unlike in the estimation of heterogeneous treatment effects \citep[see e.g.][]{imai2013estimating,athey2016recursive}, where covariates are assumed to be given, here we investigate the choice of covariates---what to measure, under a constrained budget, in order to predict or intervene. This line of questioning is also related to the theory of \emph{diagnosis} \citep{reiter1987theory,de1987diagnosing} which has a long history in the AI literature; through the current work we bring the analytical framework of diagnosis to bear on current issues in data-driven prediction and decision making in social systems. 

The main contributions of the work are as follows:
We formalize the gap between outcome prediction and intervention in a mathematical framework that combines probabilistic modeling, logical formalism, and a theory of action utility. Our model comprises: latent states of individuals, measurements, outcome, and actions to enact change in the latent states (Section~\ref{sec:model}). We then illustrate the actionability of outcome prediction with a simple numerical example in Section~\ref{sec:comp_ex}. We advance this research agenda in the setting of Boolean functions with a set of theoretical results (Section~\ref{sec:theorems}): we fully characterize the conditions under which outcome prediction can be considered actionable and show that the optimal measurement for outcome prediction is almost never the optimal measurement for outcome improvement, either when used alone or in combination with other measurements. Rather than prescribe how to best perform interventions (e.g. to improve student academic performance), our goal is to precisely describe when prediction necessarily falls short of intervention goals. In Section~\ref{sec:related}, we review further literature in related fields.

\section{Model}\label{sec:model}

Our model of data-driven decision making comprises four key elements: latent states of individuals, measurements,  outcome, and actions to enact change in the latent states. We suppose that an institutional decision maker, whom we refer to as the \emph{planner}, makes measurements for each individual in a population and takes actions based on those measurements for each individual to influence their future outcome. Formally, we describe a graphical model that comprises the following random variables. Each individual is a random draw from the model. 

\begin{itemize}[leftmargin=*]
    \item \emph{States}. There are $s$ latent states $\calS = \{X_1, \cdots, X_s\}$ that are not observed directly, each supported on $\calD_\calS$. \lledit{Each latent state indicates a factor that influences the individual's outcome, and need not be independent of other states. They have a joint distribution.}
    \item \emph{Outcome}. The outcome of interest is $Y$. The distribution of $Y$ depends on latent states, $Y~\sim~g(X_1, \cdots, X_s)$, and is supported on $\calD_Y$. $g$~is known to the planner. \lledit{In other words, we assume the planner knows the structural causal model of how states map to the outcome.}
    \item \emph{Measurements}. Planner chooses from $n$ possible measurements $\calM:=\{M_1, \cdots, M_n\}$. The distribution of $M_i$ depends on States, $M_i \sim f^i(X_1, \cdots, X_s)$ and is supported on $\calD_\calM$. There is a measurement budget of $B$ measurements. \lledit{The planner observes the realized values of $B$ chosen measurements in order to perform subsequent prediction (Section~\ref{subsec:pred}) and intervention (Section~\ref{subsec:int}) tasks.} %predict the outcome (Section~\ref{subsec:pred}) or to intervene on the latent states (Section~\ref{subsec:int}).}
\end{itemize}

\lledit{To reiterate, each graphical model comprises three types of variables: latent states, measurements, and an outcome (see Figure~\ref{fig:two_state_graph} for an illustration). There is a fourth element of the model, which is \emph{actions}.}

\emph{Actions}. After observing the value of measurement(s), the planner takes an action, $a$. Actions change the value of latent states, e.g., the action $a~=~[X_1 \leftarrow 1]$ changes the value of state $X_1$ to $1$.\footnote{We use the notation $[X\leftarrow x]$ to denote the $do$-operation that sets the value of variable $X$ to $x$.} The set of possible actions is denoted $\calA$%, e.g., $ \calA := \bigcup_{i=1}^{i=s} \{X_i \leftarrow 1\} \cup \{\emptyset\}$
    .  The cost function of action is $c:\calA \to [0,C]$, where $C>0$. The cost of taking no action, $a=\emptyset$, is $0$. \lledit{This means that the planner can take a (costly) action on behalf of each individual to modify one of their latent states.}

\lledit{In section~\ref{subsec_probleminstances}, we show how the model can be instantiated across three real world problem domains and give examples of the respective states, outcome, measurements and actions.}

\subsection{Prediction task and prediction value}\label{subsec:pred}

Consider the case where predicting $Y$ is an end in itself. Then the planner wants to choose a measurement $M \subseteq \calM$ such that $|M| \le B$ and $M$ allows the planner to predict $Y$ with the lowest prediction loss (or error) out of all size $B$ measurement sets. Given a hypothesis class $H$, and prediction loss function $\ell$, the planner constructs an optimal predictor $h^*_M$ given $M$:
\begin{equation*}
    h^*_M := \argmax_{h\in H} \E[\ell(h(M), Y)].
\end{equation*}
\lledit{For each observed value (or values) of $M$, the optimal predictor $h^*$ outputs a particular prediction of the individual's outcome $Y$. It minimizes prediction loss over the population.}
We define the \emph{prediction value} of measurement $M$ as:
\begin{equation*}
    V\pred(M) := - \E[\ell(h^*_M(M), Y)].
\end{equation*}
\lledit{The higher the prediction value of a measurement, the more informative it is for predicting the outcome, assuming that an optimal predictor is always available to the planner.}

\subsection{Intervention task and action value}\label{subsec:int}
In most cases, the goal of the planner is not simply to predict $Y$, but to take the best action to attain a more favorable outcome $Y$ for the individual. The notion of the ``best'' action requires the us to define a utility function for actions and outcomes.

Let $Y^a$ denote the outcome variable after an action $a$ has been taken. Action $a$ typically corresponds to a $do$-operation \citep[see e.g.][]{Pearl:2009:CMR:1642718} on the latent states that changes the distribution of $Y$, e.g., if $a = [X_1 \leftarrow 1]$, then $Y^a = Y^{[X_1 \leftarrow 1]}$ is that new random variable for the outcome under $do$-operation that sets the value of state $X_1$ to $1$. Let $u(y)$ denote the utility to the planner of having $Y=y$.

Given any measurement $M \in \calM$, the planner constructs an optimal action policy $a^*_M$  to maximize the net utility of taking action:
 \begin{equation*}
        a^*_M:= \argmax_{a:\calD_{\calM}\to \calA}  \E[u(Y^{a(M)})] -  \E[u(Y)] - \E[c(a(M))].
    \end{equation*}
\lledit{In words, $a^*_M$ maps any value of $M$ to an action that most improves the expected value of $Y$ conditional on the known value of $M$, taking into account action cost.}
    
We define the \emph{action value} of measurement $M$ as:
    \begin{equation*}
        V\act(M) :=  \E[u(Y^{a^*(M)})] -  \E[u(Y)] - \E[c(a^*(M))].
    \end{equation*}

    The first term $\E[u(Y^{a^*(M)})]$ is the expected utility under the action policy $a^*$.  We may write the first term as \begin{equation*}
        \E[u(Y^{a^*(M)})] = \E_M\left[\E[u(Y^{a^*(m)}\mid  M=m]\right].
    \end{equation*} to see that expectation is taken with respect to $Y$ under the $do$-operation (that is, post-action $Y$), as well as with respect to (pre-action) $M$.
    The second term $\E[u(Y)]$ is the expected utility without taking any action. The third term is the expected cost of the the action policy $a^*$.
        \lledit{The higher the action value of a measurement, the most informative it is for taking actions to the improve the outcome in a cost-effective way. }

\subsection{Motivating Problem Instances}\label{subsec_probleminstances}

We now discuss motivating real world problems where the model helps to elucidate the different measurements needed for prediction and for intervention.
We develop the first example on predicting and improving educational outcomes in some detail, and present the second example on actionable genomics for clinical interventions as a brief sketch.

\lldelete{
We now discuss three problem settings from distinct domains---education, healthcare and credit---where the model helps to elucidate the different measurements needed for prediction and for intervention.
The first example is about predicting and intervening on educational outcomes. We develop this example in detail and then sketch two other examples to illustrate problems with a similar structure in different domains.
}

\paragraph{Education and student success} Consider the use of student data and machine learning techniques to predict future educational outcomes, such as the student's risk of adverse academic outcomes in secondary school \citep{lakkaraju2015machine} and academic performance in higher education \citep{bird2021bringing,xu2017progressive,tamhane2014predicting}. In a critical study by \citet{liu2023reimagining}, education researchers that were consulted on the value of making such predictions suggested that the measurements available for making accurate predictions of future educational outcomes (e.g. data on demographic factors, behavioral factors), in the absence of interventions, are not necessarily helpful for selecting interventions to change the outcome. 

The model developed in the previous section formally illustrates such concerns and how they arise from the inherent differences between prediction and interventions at the level of measurement. 

Suppose the planner is a college official whose mandate is to improve student retention rates. We instantiate the following simplified model of student success:
\lledit{
\begin{itemize}[leftmargin=*]
    \item \emph{States}. Latent states $X_1, \cdots, X_s$ may include: $X_1$ (whether the student is overworked at job), $X_2$ (whether the student has grasped the academic prerequisites, e.g. calculus), $X_3$ (a demographic feature, e.g. parental education status), etc. These states tend not to be independent, and tend to be only observable via a measurement.
    \item \emph{Actions.} The corresponding actions are different interventions available to the school official: $[X_1 \leftarrow 1]$ giving financial aid, $[X_2 \leftarrow 1]$ tutoring calculus, etc. There is no corresponding action for $X_3$ as it cannot be modified. 
    \item \emph{Outcome}.  The outcome of interest $Y$ is whether the student returns for sophomore year. It is observed at the start of sophomore year. $Y$ is a function of the states, that is, $Y=f(X_1, X_2, X_3, \cdots)$.
    \item \emph{Measurements}. Some measurements at taken after midterm exams in freshman year. The measurements include $M_1 := X_1$ (student job status),  $M_2:=X_2$ (diagnostic calculus test), $M_3:=X_3$ (demographic), $M_4:=f(X_1, X_2, X_3)$ (midterm grades), etc.
\end{itemize}
}
In this case, knowing $M_3$ (midterm grades) may be very helpful for predicting $Y$, but it is less helpful for determining which costly action (financial aid or tutoring) should be used to intervene on the student's future retention outcome. In the same vein, $M_1$ and $M_2$ can inform whether the student requires a particular intervention, but without $M_3$, they cannot be used to predict $Y$ as accurately, since $Y$ depends on all three latent states. From an education and testing research perspective, diagnostic tests are different from achievement or proficiency tests \citep{alderson2015towards}---even though $M_2$ and $M_4$ are both test results, the former better informs interventions as it diagnoses specific academic areas that benefit from tutoring.

\paragraph{Genomics for clinical decisions} This example is taken from \citet{nelson2013being}, a study of ``actionability'' in the context of clinical sequence. Suppose the planner is a hospital with multiple patients to treat. The outcome of interest is health (e.g. the absence of cancer). Each patient has set of \emph{states} including phenotypes (e.g. whether a patient has a mutated enzyme) and risk factors (e.g. family history of cancer) that together determine their future health outcome. \emph{Measurements} are the genetic sequences of the patient (e.g. whether a patient has a mutation in the anaplastic lymphoma kinase (ALK) gene).

Depending on the type of mutation (e.g. hereditary mutations in pre-symptomatic individuals or non-heritable sporadic mutations), a state may or may not be associated with an \emph{action} that can improve the patient's health outcome: non-heritable mutations in the tumor may be associated with a drug mechanism that that can block the function of the mutated enzyme, whereas gene markers that are associated with future health risks typically cannot be targeted by any particular drug pathway.

\lldelete{
\todo{remove below if space}
\paragraph{Predicting default and lending decisions} Suppose the planner is a lender. For each credit applicant, the \emph{outcome} of interest to the lender could be the expected profit from this loan. \emph{Actions} available to the lender include the terms of the loan, including whether to extend a loan at all. The lender not only wants to predict the expected profit (e.g. from credit risk); they also need to make measurements that allows them to structure loan terms such that the profit from the loan can be improved, e.g. the credit applicant's sensitivity to interest rate changes. \todo{e.g. Latent state total monthly payment credit applicant can afford. This is an example, where the planner doesn't necessarily have the individual's interest in mind. Both individual and planner agree that Y is advancing their long-term interest.}
}

\section{Illustrative example with two latent states}\label{sec:comp_ex}

In this section, we work out a simple example of the model for illustration. In this instantiation of the model, we assume that all variables are binary. The corresponding graphical model is displayed in Figure~\ref{fig:two_state_graph}.

\begin{itemize}[leftmargin=*]
    \item \emph{States}. There are two latent states $X_1, X_2$ distributed as independent Bernoulli random variables with failure rate $p< 0.5$. That is, we have $X_1, X_2 \sim$ Bernoulli$(1-p)$. 
    \item \emph{Outcome}. The outcome of interest is $Y := X_1 \land X_2 $, where $\land$ denotes the logical and.
    \item \emph{Measurements}. The space of measurements $\calM$ is all Boolean functions of $(X_1,X_2)$. The measurement budget is $B=1$. For the purposes of this example, we focus on the following 3 measurements:\[M_1 := X_1; \quad  M_Y := X_1 \land X_2; \quad  M_\piv := X_1 \land \neg X_2. \]
    
    % \begin{align*}
    %     M_1 &:= X_1 \\
    %     M_Y &:= X_1 \land X_2\\
    %     M_\piv &:= X_1 \land \neg X_2 
    % \end{align*}
By the symmetry of the example, the other plausible measurements such as $X_2$ and $X_2 \land \neg X_1$ follow similar calculations. We call $M_\piv$ a \emph{pivotal} measurement, which indicates that a particular state is pivotal for changing the outcome.\footnote{In Section~\ref{app:example_pivotal}, we describe examples of real world pivotal measurements , such as ``tell-tale'' symptoms of diseases that exclude other conditions and suggest a clear treatment path, and contrast them with non-pivotal versions.}
    \item \emph{Actions}. The actions are $\calA :=  \{[X_1 \leftarrow 1] , [X_2 \leftarrow 1],  \emptyset\}$. The cost of action is fixed for $[X_i\leftarrow 1]$ at $c > 0$.
    \item \emph{Utility}. Planner's utility from outcome $Y$ is $u(y) = y$.
\end{itemize}

\begin{figure}[tbp]
    \centering
\begin{tikzpicture}[->,shorten >=1pt,auto,node distance=15mm,
                    thick,main node/.style={circle,draw,font=\small,inner sep=2pt}]

  \node[main node] (Y) {$Y$};
  \node[main node] (A1) [above left=7mm of Y] {$X_1$};
  \node[main node] (A2) [below left=7mm of Y] {$X_2$};
  \node[main node] (M1) [left=15mm of A1] {$M_1$};
  \node[main node] (M2)  [below= 1mm of M1] {$M_Y$};
  \node[main node] (M3) [left=14mm of A2] {$M_\piv$};

  \path[every node/.style={font=\small}]
    (A1) edge node [right] {} (Y)
        edge node [right] {} (M1)
        edge node [right] {} (M2)
        edge node [right] {} (M3)
    (A2) edge node [right] {} (Y)
        edge node [right] {} (M2)
        edge node [above, midway] {$\neg$} (M3)
    ;
\end{tikzpicture}

    \caption{Binary variable model with 2 latent states. Arrows indicate logical addition unless otherwise stated.}
    \label{fig:two_state_graph}
\end{figure}
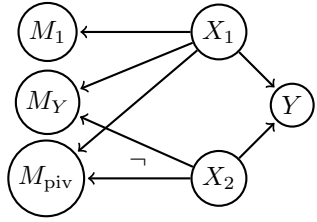

\begin{figure*}[tbp]
    \centering
    \includegraphics[width=0.5\textwidth]{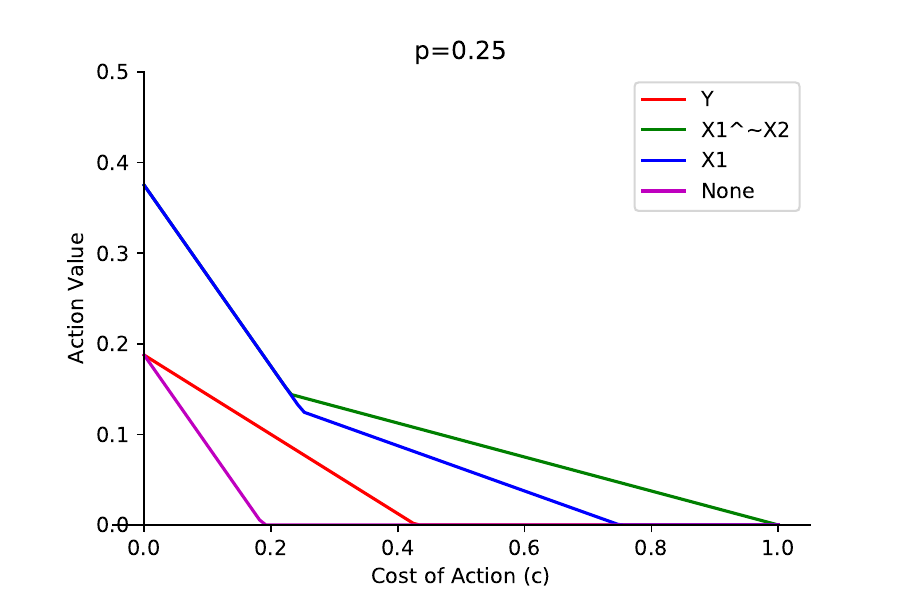}%
    \includegraphics[width=0.5\textwidth]{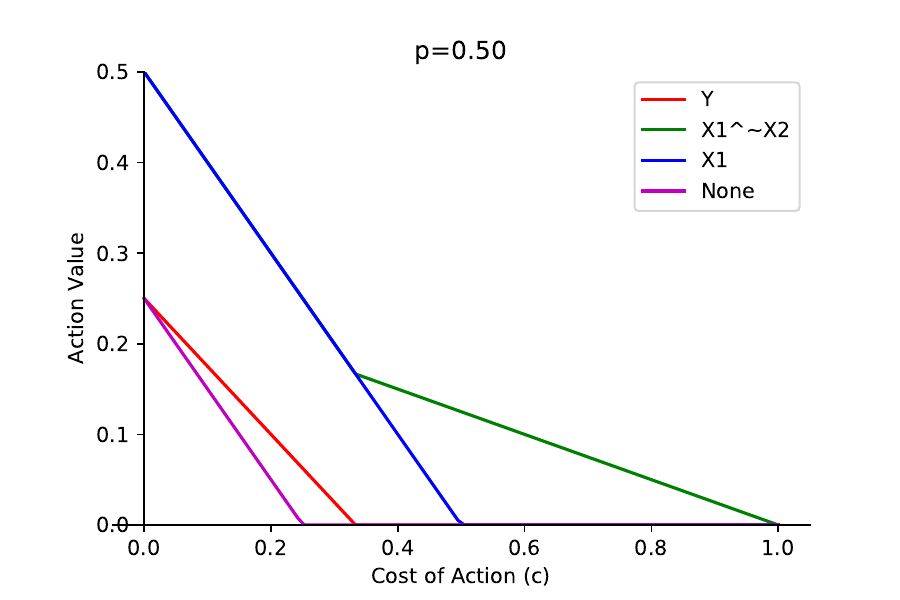}
    \caption{Action value against action cost for 3 measurements: $M_Y$ (Highest prediction value), $M_\piv$, and $M_1$. The action value of making no measurements is included as a baseline. Failure rate $p$ is set to $0.25$ in left plot, and to $0.5$ in the right plot.}
    \label{fig:two_state_act_val}
\end{figure*}

\subsection{Prediction value}

Suppose the hypothesis class $H$ is any (potentially randomized) Boolean function on $\{0,1\}$ and we consider the 0-1 loss. For notational brevity, we write $q= 1-p$ whenever necessary. By elementary calculations, we know:
\begin{itemize}[leftmargin=*]
    \item The best predictor of $Y$, given $M_1$ is to predict $1$ when $M_1 = 1$ and $0$ when $M_1$ = 0. 
    \item The best predictor of $Y$, given $M_Y$ is to predict $M_Y$. 
    \item The best predictor of $Y$, given $M_\piv$, is 
    \begin{itemize}
        \item If $q^2 > p$: predict $0$ when $M_\piv = 1$ and predict $1$ when $M_\piv = 0$;
        \item Otherwise: always predict $0$. \lledit{(See footnote\footnote{When $M_\piv = 1$, $Y$ must be $0$, that is $\Pr(Y=0\mid M_\piv=1)=1$ but in the case when $M_\piv = 0$, we have $\Pr(Y=1\mid M_\piv) > \Pr(Y=0 \mid M_\piv)$ if and only if $q^2 > p$.}.)}
    \end{itemize}
\end{itemize}  

We may compute the (negative) prediction value of each measurement as follows. This is none other than the expected loss of the respective optimal predictor:
\begin{align*}
     -V\pred(M_1) %&=\Pr(h^*(M_1)= Y) \\
        &= 1- \Pr(X_1 = 1, X_2 = 0) \\
        &= 1- pq, \\ 
    -V\pred(M_Y) &= 1 \\
    -V\pred(M_\piv) &= \max(1-p, 1-q^2).
\end{align*}

Hence, ranking the measurements by prediction value, we have $M_Y \succeq M_1 \succeq M_\piv$.

\subsection{Action value}

We turn to the intervention task. To compute action values, we analyze the best action policies given each measurement. First consider $M_\piv$. In this case, the best action policy is:  
 \begin{itemize}[leftmargin=*]
        \item If $M_\piv = 1$, we have $X_1 = 1$ and $X_2 = 0$. The best action is $[X_2 \leftarrow 1]$.
        \item If $M_\piv = 0$, the best action depends on $c$ and $p$. If $\Pr(X_2\land \neg X_1 = 1 \mid M_\piv = 0) > c$, the best action is $[X_1 \leftarrow 1]$. Otherwise, the best action is to do nothing.
    \end{itemize}
The key takeaway is that $M_\piv$ allows the planner to take the action that is \emph{pivotal} for improving the outcome. This both maximizes the utility gain from successfully improving the outcome, and minimizes the cost of taking actions.

Performing similar analyses for $M_Y$ and $M_1$, we find that the action value of each measurement is:
\begin{align*}
    V\act(M_1) &= \max(0, pq - pc) + \max(0, pq - qc)\\ 
    V\act(M_Y) &= \max(0, pq - (1-q^2)c) \\
    V\act(M_\piv) &= \max(0, pq(1-c)) + \max(0, pq - (1-pq)c.
\end{align*}

In Figure~\ref{fig:two_state_act_val}, we plot the action value against cost $c$ for two different $p$ parameters . First we observe that $M_\piv$ has the highest action value regardless of action costs.

We also observe that when actions are very costly, the measurement $M_Y$ is no longer helpful in terms of action utility, i.e., $V\act(M_Y)=0$. $M_Y$ corresponds to perfect knowledge of the outcome $Y$, which is typically not possible in reality. Yet, even under this favorable assumption, we see that knowing the outcome has rather limited utility for effective intervention.

In contrast, $M_\piv$ and $M_1$, which as we recall have lower prediction value, help to inform good action policies. When actions are low-cost, all measurements have positive action value. When actions are very low-cost, $M_\piv$ and $M_1$ have the same action value and their advantage over $M_Y$ is even greater than when actions are costly.

Ranking the measurements by action value, we have $M_\piv \succeq M_1 \succeq M_Y$. In this case, the order is completely reversed from the ranking by prediction value. In Section~\ref{sec:theorems}, we will see that this is an instance of a more general phenomenon.

\subsection{Examples of pivotal measurements}\label{app:example_pivotal}

We introduced the \emph{pivotal measurement} in the the preceding sections as a mathematical construct, that is $M_\piv := X_1 \land \neg X_2$, and showed that it attains the highest action value among all measurements in a simple example. In this section, we consider plausible real world analogues for the pivotal measurement.

A pivotal measurement indicates that a particular intervention is sufficient for improving an individual's outcome. 
In the education realm, being a recipient of a Federal Pell grant could be thought of as a natural example of a pivotal or near-pivotal measurement. It indicates that one has exceptional financial need \emph{and} that one is in good academic standing. It is particularly helpful for targeting an educational intervention that addresses financial opportunity for the purposes of improving educational outcome, since Pell grant recipients are more likely to benefit from this intervention than a student who has financial need but may not be in good academic standing, or a student who is in good academic standing but may not have financial need. 

In the context of medical diagnosis and treatment, most physiological measurements can point to multiple possible ailments. Pivotal measurements correspond to what we think of as `tell-tale' symptoms that suggest a clear treatment path. For example, a patient who has a cough might have either COVID or the common cold, whereas losing the sense of taste and smell is a more distinguishing symptom of COVID \citep{dawson2021loss}, giving the clinical practitioner a higher degree of certainty that the patient should be treated for COVID. Another example of a pivotal measurement in the context of disease is \emph{erythema migrans}, the characteristic rash used for the early diagnosis of Lyme disease, commonly known as the bull's eye rash \citep{aucott2009diagnostic}. Though not all Lyme disease patients develop the rash, the presence of the bull's eye pattern is considered to be more indicative than a blood test that a patient should be treated. %In contrast, symptoms such as fatigue and aches are not pivotal measurements for Lyme disease treatment.

\section{Main results: Boolean outcome functions}\label{sec:theorems}

In this section, we consider a general setting, where  $Y$ is a Boolean function of $s$ States, $X_1, \cdots, X_s$. That is, $Y:\{0,1\}^s \to \{0,1\}$. This is a simplified setting where the states, and the outcome, can be either good or bad; yet the space of Boolean functions is sufficiently rich to capture wide range of interactions between states and outcome. Thus we focus on such functions for our theoretical analysis.

Suppose that we can measure $M$, any Boolean function of $X_1, \cdots, X_s$; in other words, $\calM$ is the set of all Boolean functions over $\{0,1\}^s$. The set of possible actions is $\calA = \bigcup_{i=1, \cdots, s, x \in \{0,1\}}\{[X_1 \leftarrow x]\} \cup \emptyset$. In words, the planner can set the value of any state to $0$, or $1$, or do nothing.

\subsection{Prediction and action for single measurement}

We first consider the case where the measurement budget is $B=1$. In Proposition~\ref{prop:pivotal}, we work through an example for symmetric and monotone outcome function $Y$ where the states are i.i.d. Bernoulli random variables. 
Then in the main result (Theorem~\ref{thm:main_single}), we give the sufficient and necessary condition for outcome prediction to have maximum action value for any Boolean $Y$. The condition results in a very constrained outcome model, where a single latent state always improves the outcome (see Definition~\ref{def:full_im}). In other words, outcome prediction never has the optimal action value except in highly degenerate models.

The following illustrative result generalizes the example in Section~\ref{sec:comp_ex}. Proposition~\ref{prop:pivotal} gives an explicit expression for a measurement $M^*$ that has higher action value than the measurement $M_Y$ that perfectly tracks the outcome $Y$. $M^*$ generalizes the pivotal measurement that was introduced in the previous section. The proof proceeds by deriving explicit algebraic expressions for action values in terms of model parameters.

\begin{proposition}[Construction of measurement with high action value]\label{prop:pivotal}
    Suppose $Y$ is a symmetric and monotone Boolean function of $s$ States, $X_1, \cdots, X_s$, which are i.i.d. Bernoulli$(1-p)$ random variables. We can measure $M$, any Boolean function of $X_1, \cdots, X_s$, and take any action $a \in \calA$ for a fixed cost $c \in (0,1)$. Then, the following measurement $M^*$ has higher action value than $M\pred = Y$ for any $c\in (0,1)$:
    \begin{align*}
       & M^*(a_1, \cdots, a_s) = 1 \iff \\ &Y(a_1, \cdots,a_{i-1}, 1-a_i, a_{i+1}, \cdots, a_s) \\ &-  %Y(a_1, \cdots, a_s) 
        Y(a_1, \cdots,a_{i-1}, a_i, a_{i+1}, \cdots, a_s) =1,
    \end{align*}
    for any $i = 1, \cdots, s$. Moreover, the inequality is strict for all but univariate $Y$.
\end{proposition}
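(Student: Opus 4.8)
The plan is to put $Y$ into its normal form. Since $Y$ is symmetric and monotone, it is a threshold function: there is $t\in\{0,1,\dots,s+1\}$ with $Y=\Ind{W\ge t}$, where $W=\sum_j X_j\sim\mathrm{Binomial}(s,1-p)$. In this form the defining condition for $M^*$ collapses to something explicit: flipping $X_i$ raises $Y$ from $0$ to $1$ exactly when $X_i=0$ and $W=t-1$, so $M^*=1\iff\{X_i=0,\ W=t-1\}$, and in particular $\{M^*=1\}\subseteq\{Y=0\}$. Because $u(y)=y$, per-individual utility is bounded by $1$ and every nontrivial action costs $c\in(0,1)$, so the largest achievable net gain on any individual is $1-c$, attained precisely when a single action turns a sure $Y=0$ into a sure $Y=1$.

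First I would compute $V\act(Y)$ exactly. Observing $Y=1$ the planner already has maximal utility and does nothing; observing $Y=0$, monotonicity rules out any $[X_j\leftarrow 0]$ action, and by exchangeability every $[X_j\leftarrow 1]$ has the same value, so the optimal $Y$-policy either applies one such action or abstains. The key computation is the success probability $\alpha:=\Pr(Y^{[X_j\leftarrow 1]}=1\mid Y=0)$: the joint event $\{Y^{[X_j\leftarrow 1]}=1,\ Y=0\}$ is exactly $\{X_j=0,\ W=t-1\}$, whose probability equals $\Pr(M^*=1)$ by symmetry. Hence $\alpha=\Pr(M^*=1)/\Pr(Y=0)$ and
\begin{equation*}
V\act(Y)=\Pr(Y=0)\,\max\!\big(0,\ \alpha-c\big)=\max\!\big(0,\ \Pr(M^*=1)-c\,\Pr(Y=0)\big).
\end{equation*}

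Next I would lower-bound $V\act(M^*)$ by a deliberately suboptimal policy: on $\{M^*=1\}$ apply $[X_i\leftarrow 1]$, which by construction deterministically yields $Y=1$ for net gain $1-c$, and on $\{M^*=0\}$ do nothing; this gives $V\act(M^*)\ge\Pr(M^*=1)(1-c)$. Using $\{M^*=1\}\subseteq\{Y=0\}$, so that $\Pr(Y=0)-\Pr(M^*=1)=\Pr(Y=0,M^*=0)$, I would then compare against $V\act(Y)$ regime by regime: in the acting regime the difference is at least $c\,\Pr(Y=0,M^*=0)\ge 0$, and in the abstention regime it is at least $\Pr(M^*=1)(1-c)\ge 0$. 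Either way $V\act(M^*)\ge V\act(Y)$ for every $c\in(0,1)$. The intuition is that $M^*$ pays the cost $c$ only on the subpopulation where the action is certain to work, whereas the $Y$-policy pays $c$ across the whole $\{Y=0\}$ cell even though the action succeeds on only an $\alpha$-fraction of it.

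Finally, strictness reduces to showing the lower bound above is strictly positive, namely $\Pr(M^*=1)>0$ in the abstention regime and $\Pr(Y=0,M^*=0)>0$ in the acting regime; I expect this equality analysis, rather than the inequality, to be the main obstacle. I would evaluate $\Pr(Y=0,M^*=0)=\Pr(W\le t-2)+\Pr(W=t-1,\ X_i=1)$ and show it is positive for all nondegenerate thresholds, the non-strict cases being exactly those where $M^*$ is informationally equivalent to $Y$ (the constant functions and the single-threshold case $t=1$, which for $s=1$ is precisely the univariate $Y$ singled out in the statement). Additional care is needed to confirm that the simple $M^*$-policy genuinely dominates the \emph{optimal} $Y$-policy across both branches of the $\max$, and to handle the boundary values of $p$ where some binomial weights vanish.
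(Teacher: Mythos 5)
Your proposal is correct and follows essentially the same route as the paper's proof: reduce $Y$ to a threshold function, identify $\{M^*=1\}$ with the event that $X_i$ is pivotal (i.e.\ $X_i=0$ and $W=t-1$), compute $V\act(Y)=\max(0,\Pr(M^*=1)-c\,\Pr(Y=0))$, lower-bound $V\act(M^*)\ge \Pr(M^*=1)(1-c)$ via the act-only-when-certain policy, and compare using $\{M^*=1\}\subseteq\{Y=0\}$. Your strictness discussion is in fact slightly more careful than the paper's: you correctly observe that the threshold-one (OR) case with $s>1$ is also non-strict because there $M^*$ is informationally equivalent to $Y$, which matches what the paper's proof actually establishes (strictness only for $k>1$) even though the proposition's wording claims strictness for all non-univariate $Y$.
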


\begin{proof}[Proof of Proposition~\ref{prop:pivotal}]
 WLOG, we consider monotonically non-decreasing $Y$. Note that in this case $Y$ is in the class of threshold functions, that is, $Y = \Ind{\sum_{i=1}^{s}X_i \ge k}$.

     We use the following notation $\{X_i~\pivotal\}$ to denote the event 
     \begin{align*}
       \{&Y(X_1, \cdots,X_{i-1}, 1-X_i, X_{i+1}, \cdots, X_s) \\
       - &  Y(X_1, \cdots,X_{i-1}, X_i, X_{i+1}, \cdots, X_s) =1\}.
     \end{align*}

        Fix any $i\in [s].$ Recall that $p$ is the probability $X_i = 0$. Denote $q = 1-p$.
        
    Given  $M\pred = Y$, the best single action (to improve $Y$) is as follows:
    \begin{itemize}
        \item If $Y=1$, do nothing.
        \item If $Y=0$, 
        \begin{itemize}
            \item Do $a = [X_i\leftarrow 1]$, if $Q > c$, where \begin{align*}
                Q &= \E[Y^{[X_i\leftarrow 1]}, Y=0] -  \E[Y\mid Y=0] \\
                &= \Pr(X_i~\pivotal \mid Y=0).
            \end{align*}
            \item Do nothing otherwise.
        \end{itemize}
    \end{itemize}
    
    Therefore we have
    \begin{align*}
        V\act(Y) &= \max(0,\Pr(X_i~\pivotal) - c\cdot \Pr(Y=0)),\\
        \text{where}~& \Pr(X_i~\pivotal) = {s-1 \choose k-1} p^{s-k+1}q^{k-1}, \\
        \text{and}~& \Pr(Y = 0) = \sum_{m=k}^{m=s} { s \choose m} p^m q^{s-m}.
    \end{align*}
    
    On the other hand, given $M^*$, the best single action is:
    \begin{itemize}
        \item If $M^* = 1$, $X_i$ is $\pivotal$
        \begin{itemize}
            \item do $a = [X_i\leftarrow 1]$, since $1 > c$
            \item else, do nothing.
        \end{itemize}
        \item If $M^* = 0$, $X_i$ is not $\pivotal$
        \begin{itemize}
            \item do $a = [X_j\leftarrow 1]$, for some $j\ne i$, if $\Pr(X_j~\pivotal\mid X_i~\isnot~\pivotal) > c$
            \item else, do nothing.
        \end{itemize}
    \end{itemize}
    
    Thus we have
    \begin{equation}
        V\act(M^*) \geq \Pr(X_i~\pivotal)\cdot (1-c).
    \end{equation} 
    Note that $\Pr(X_i~\pivotal) \le \Pr(Y=0)$. Thus we have shown that $V\act(M^*)\geq V\act(Y).$
    
    Moreover for $s\ge k > 1$, $\{X_i~\pivotal\} \ne \{X_j~\pivotal\}$ so $\Pr(X_i~\pivotal) < \Pr(Y=0)$. Thus we have \[V\act(M^*) > V\act(Y).\]
\end{proof}

The remaining goal of this section is to generalize the above proposition to arbitrary Boolean outcome function $Y$. To do so, we introduce two new definitions. 
\begin{definition}\label{def:full_im}
    An outcome $Y$ is \textbf{fully improvable} if for any $x_1, \cdots, x_s$ where $\Pr(X_1=x_1, \cdots, X_s = x_s ) > 0$ and $Y(x_1, \cdots, x_s) = 0$, there exists $i, x$, s.t.
\begin{equation*}
    \Pr(Y^{[X_i\leftarrow x]}=1\mid X_1=x_1, \cdots, X_s = x_s)=1.
\end{equation*}
\end{definition}
\begin{definition}
The %state-value tuple $(X_{i^*}, x)$ 
action $[X_{i^*}\leftarrow x]$ 
is \textbf{sufficient} for improving $Y$ if $\forall x_1, \cdots, x_s$ s.t. $\Pr(X_1=x_1, \cdots, X_s = x_s) > 0$,
\begin{equation*}
    \Pr(Y^{[X_{i^*}\leftarrow x]}=1\mid X_1=x_1, \cdots, X_s = x_s)=1.
\end{equation*}
\end{definition}
Full improvability is a strong condition on the outcome $Y$ which states that there always exists a single action on the latent states that improves $Y$ almost surely. This single action can in general depend on the latent states. Full improvability is already a restrictive condition: threshold functions, $\Ind{\sum_{i=1}^{s}X_i \ge k}$, are not fully improvable for $k>1$. 

The existence of a sufficient action is an even stronger condition which states that the same single action improves $Y$ almost surely across all realizations of the latent states. We note that having a sufficient action indicates that the outcome $Y$ is fully improvable, but the former does not necessarily imply the latter. For example, the parity function of $s$ Boolean variables is fully improvable, but it does not have a sufficient action. On the other hand, we note that every monotone and fully improvable $Y$ must have a sufficient action.

Recall that having a sufficient action means that there's a single action that improves $Y$ whenever $Y=0$ regardless of the configuration of the latent states; that is, one treatment helps all equally. We now show that predicting the outcome is optimal for taking actions if and only if the strong and likely unrealistic condition---of having a sufficient action---holds. The proof of the forward implication proceeds by constructing a measurement $M$ such that it has higher action value than $Y$, whenever $Y$ does not have a sufficient action. We illustrate the proof idea in Figure~\ref{fig:proof_idea}.

\begin{theorem}[Outcome prediction and maximum action value]\label{thm:main_single}
    Let $Y(X_1, \cdots, X_s)$ be an $s$-dimensional Boolean function such that $Y \not\equiv 0$. If $Y$ does not have a sufficient action, there exists $M(X_1, \cdots, X_s)$ such that $V\act(M) > V\act(Y)$ for $c<1$. If $Y$ has a sufficient action $[X_{i^*}\leftarrow x]$,
    then $V\act(Y)$ is maximal for any cost $c$. 
\end{theorem}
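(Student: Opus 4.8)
The plan is to reduce the action value of any one‑bit measurement to a single scalar and then treat the two implications separately. With utility $u(y)=y$ and fixed cost $c$, the optimal policy applies, for each observed value $m$, the action maximizing the conditional improvement, so $V\act(M)=\E_M[\max(0,D(M)-c)]$ where $D(m):=\max_{i,x}\E[Y^{[X_i\leftarrow x]}-Y\mid M=m]$. Since $Y$ is Boolean, $Y^{[X_i\leftarrow x]}-Y\le \Ind{Y=0}$ pointwise, so $D(m)\le \Pr(Y=0\mid M=m)$. Specializing to $M=Y$, and using that no action beats doing nothing when $Y=1$, gives the closed form $V\act(Y)=\Pr(Y=0)\max(0,D_0-c)$ with $D_0:=\max_{i,x}\Pr(Y^{[X_i\leftarrow x]}=1\mid Y=0)$. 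The organizing observation is that having a sufficient action is equivalent to $D_0=1$: a sufficient action forces $Y^{[X_{i^*}\leftarrow x]}=1$ on every configuration, in particular conditional on $Y=0$; conversely (when every configuration has positive probability) an action attaining $D_0=1$ fixes every $Y=0$ configuration and hence every configuration.

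For the easy direction (sufficient action $\Rightarrow$ maximal) I would invoke information monotonicity of the action value: every measurement is a function of $(X_1,\dots,X_s)$, so any policy feasible for $M$ is feasible once the full state is revealed, giving $V\act(M)\le V\act(M_{\mathrm{full}})$ for all $M$, where $M_{\mathrm{full}}$ reveals $(X_1,\dots,X_s)$. A per‑configuration computation yields $V\act(M_{\mathrm{full}})=\max(0,1-c)\cdot\Pr(Y=0,\ Y\text{ fixable})$, where a $Y=0$ configuration is \emph{fixable} if some single action sets $Y$ to $1$. When a sufficient action $[X_{i^*}\leftarrow x]$ exists it fixes every $Y=0$ configuration, so that probability is exactly $\Pr(Y=0)$ and $V\act(M_{\mathrm{full}})=\max(0,1-c)\Pr(Y=0)=V\act(Y)$ (using $D_0=1$). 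Hence $V\act(Y)=V\act(M_{\mathrm{full}})\ge V\act(M)$ for every $M$ and every $c$. (Equivalently, one can apply the chord bound $\max(0,w-c)\le w\max(0,1-c)$ for $w\in[0,1]$ to $w=\Pr(Y=0\mid M)$, combined with $D(m)\le\Pr(Y=0\mid M=m)$ and $\E_M[\Pr(Y=0\mid M)]=\Pr(Y=0)$.)

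For the main direction (no sufficient action $\Rightarrow$ beatable), assume $D_0<1$ and let $a^*=[X_{i^*}\leftarrow x^*]$ attain $D_0$. I would construct the generalized pivotal measurement $M^*:=\Ind{Y=0\text{ and }Y^{a^*}=1}$, which reports whether the targeted action $a^*$ would flip the outcome from $0$ to $1$ for the realized states (this recovers the $M^*$ of Proposition~\ref{prop:pivotal} in the symmetric case). When $M^*=1$ the planner knows $a^*$ improves $Y$ with certainty and gains $1-c$; when $M^*=0$ the planner can always fall back to doing nothing and gains at least $0$. Therefore $V\act(M^*)\ge \Pr(M^*=1)(1-c)=\Pr(Y=0)\,D_0\,(1-c)$, since $\Pr(M^*=1)=\Pr(Y=0)\Pr(Y^{a^*}=1\mid Y=0)=\Pr(Y=0)D_0$. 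Comparing with $V\act(Y)=\Pr(Y=0)\max(0,D_0-c)$, it remains to verify $D_0(1-c)>\max(0,D_0-c)$: this reduces to $c(1-D_0)>0$ when $D_0>c$ and to $D_0(1-c)>0$ otherwise, both of which hold whenever $0<D_0<1$ and $0<c<1$, giving $V\act(M^*)>V\act(Y)$.

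The main obstacle is making the strict inequality genuinely strict, i.e.\ pinning down $0<D_0<1$ together with the exact equivalence ``sufficient action $\iff D_0=1$''. The bound $D_0<1$ is precisely the no‑sufficient‑action hypothesis, but the positivity $D_0>0$ (equivalently, that some $Y=0$ configuration is fixable) and the backward equivalence rely on the support of the state distribution. Under full support, $Y\not\equiv 0$ forces, whenever $\{Y=0\}$ is nonempty, a boundary configuration adjacent to $\{Y=1\}$ and hence a fixable $Y=0$ configuration, and the ``improves $Y$ whenever $Y=0$'' and ``sufficient on all configurations'' readings coincide. For degenerate supports they can separate (for instance, $Y=X_1\oplus X_2$ restricted to three of its four inputs admits an action improving $Y$ whenever $Y=0$, yet no action that is sufficient on all inputs), so I would either assume full support or read the sufficiency condition in its ``improves $Y$ whenever $Y=0$'' form as the surrounding prose suggests; with that settled, the remaining steps are routine algebra.
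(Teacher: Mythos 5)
Your proposal is correct and follows essentially the same route as the paper: the forward direction uses the identical pivotal construction $M^*=\Ind{Y=0,\ Y^{a^*}=1}$ with the same $D_0(1-c)$ versus $\max(0,D_0-c)$ comparison, and the converse direction's chord bound $\max(0,w-c)\le w(1-c)$ is exactly the paper's term-by-term estimate (your information-monotonicity variant is an equivalent repackaging). Your added care about the edge cases --- needing $c>0$ and $D_0>0$ for strictness, and the support-dependent reading of ``sufficient action'' --- addresses points the paper's own proof leaves implicit, and is warranted rather than a deviation.
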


\begin{proof}
We prove the first direction, that is, we assume $Y$ does not have a sufficient action. Suppose the best action given $Y=0$ is $do(X_1\leftarrow x)$, WLOG. Since the best action given $Y=1$ is $\emptyset$, the action value of $Y$ is
\begin{equation*}
    V\act(Y) = \max(0, \Pr(Y=0 \cap Y^{[X_{i^*}\leftarrow x]}=1) - \Pr(Y=0)\cdot c)
\end{equation*}
Let $M$ be s.t. $\{M=0\}= \{Y=0\} \cap \{Y^{[X_{i^*}\leftarrow x]}=1\} $. The action value of $M$ is 
\begin{equation*}
    V\act(M) \ge \max(0, \Pr(Y=0 \cap Y^{[X_{i^*}\leftarrow x]}=1)\cdot (1-c)).
\end{equation*}
By assumption, we have that $\Pr(Y=0) > \Pr(M=0)$. Therefore, for any $c<1$, we have $V\act(M) > V\act(Y)$.

Now, for the other direction. Recall that the action value of $Y$ is $\Pr(Y=0)\cdot (1-c)$. Consider some measurement $M$ that is $(X_1, \cdots, X_s)$-measurable, and suppose the optimal action policy given $M$ is:
\[a(M) = \begin{cases}
[X_{i} \leftarrow x_i] &\text{ if } M=0 \\
[X_{j} \leftarrow x_j] &\text{ if } M=1
\end{cases}.\]
Then the action value of $M$ is $V\act(M)$
\begin{align*}
   % &\quad ~V\act(M) \\
    &= \max(0, \Pr(Y=0, M=0, Y^{[X_i\leftarrow x_i]} =1) - \Pr(M=0)\cdot c)\\
    &\quad + \max(0, \Pr(Y=0, M=1, Y^{[X_j\leftarrow x_j]} =1) - \Pr(M=1)\cdot c)\\
    % &\leq \max(0, \Pr(Y=0, M=0) - \Pr(M=0)\cdot c)\\
    % &\quad + \max(0, \Pr(Y=0, M=1) - \Pr(M=1)\cdot c)\\
    &\leq \max(0, \Pr(Y=0, M=0) - \Pr(M=0, Y=0)\cdot c)\\
    &\quad + \max(0, \Pr(Y=0, M=1) - \Pr(M=1, Y=0)\cdot c)\\
    &= \Pr(Y=0, M=0)\cdot (1-c) + \Pr(Y=0, M=1) \cdot (1-c)\\
    &= \Pr(Y=0)\cdot (1-c).
\end{align*}
This shows that $Y$ has the maximal action value among all measurements.
\end{proof}

\begin{figure}[tbp]
    \centering
    \resizebox{0.5\textwidth}{!}{
    \begin{tikzpicture}
    % Left half
    \draw[thick] (-4,0) arc (180:0:4 and 2.5);
    \node at (-3.5, 2) {$Y=0$};

    % Right half
    \draw[thick] (4,0) arc (0:-180:4 and 2.5);
    \node at (3.5, 2) {$Y=1$};

    \begin{scope}
    \clip (0,0) circle (3.5);
    \fill[gray!50] (0,0) circle (2);
    \end{scope}
        \begin{scope}
    \clip (0,0) circle (2);
    \fill[blue!30] (-2,2) rectangle (0,-2.5);
    \end{scope}
    
    \draw[thick] (0,0) circle (2);
        \node[fill=white, inner sep=2pt] at(0, 0)  {$Y^{[X_{i^*} \leftarrow x]}=1$};

     % Middle line
    \draw[thick] (0,2.5) -- (0,-2.5);

    % Line and label
    \draw[thick] (-1,-1) -- (-1, -2.8);
    \node at (-1, -3) {$M=0$};

    \end{tikzpicture}
    }
    \caption{Illustration of forward implication in proof of Theorem 4.2. The large oval depicts the measure space over latent states $(X_1, \cdots, X_s)$. The blue shaded region depicts the subset of latent states where $M$ takes value $0$ and is the intersection of two regions---the region where $Y=0$ and the region where $Y^{[X_{i^*} \leftarrow x]}=1$.}\label{fig:proof_idea}
\end{figure}
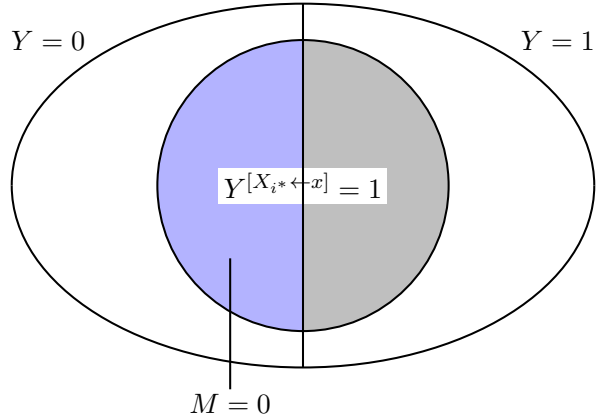

\subsection{Prediction and action for a measurement set}

In this section, we consider measurement sets of size $B > 1$ and we prove a generalization of the second implication in Theorem~\ref{thm:main_single}---that $Y$ is typically not part of a set of measurements that together maximizes action value.

As we turn our consideration from single measurements to  measurement sets, information that is conveyed from certain measurements may become redundant. On the question of whether $Y$ is an actionable measurement when used in combination with other measurements, we would therefore like to focus on measurement sets where $Y$ is \textbf{non-redundant}, defined as follows.
\begin{definition}\label{def:non_redun}
    Consider $Y\cup S$, a size-$B$ measurement set containing $Y$ for $S$ such that $|S| = B-1$ and $S\subseteq \calM$. We say that $Y$ is \textbf{non-redundant} with respect to $S$ \lldelete{in $Y\cup S$} if there exists $s \in \{0,1\}^{B-1}$ such that the best action when $Y=0, S=s$ is not $\emptyset$. We call the 
    set of such $s$ the \textbf{$Y$-relevant set} with respect to $S$:
    \begin{equation*}
        \{s \in \{0,1\}^{B-1}: a^*(Y=0, S=s) \neq \emptyset\}.
    \end{equation*}
\end{definition}

Note that the best action when $Y=1$ is always $\emptyset$, so by Definition~\ref{def:non_redun}, the non-redundant set of $Y$ with respect to $S$ \lldelete{in $Y\cup S$}  is where optimal action is dependent on $Y$ conditioning on $S$.
We also extend the notion of sufficient action to subsets of the probability space.
\begin{definition}\label{def:suff_act_cond}

For any $\calS$-measurable set $\calF$, the action $[X_{i^*}\leftarrow x]$ 
is \textbf{sufficient for improving $Y$ on $\calF$} if $\forall x_1, \cdots, x_s$ s.t. $\Pr(X_1=x_1, \cdots, X_s = x_s\mid \calF) > 0$,
\begin{equation*}
    \Pr(Y^{[X_{i^*}\leftarrow x]}=1\mid X_1=x_1, \cdots, X_s = x_s)=1.
\end{equation*}
\end{definition}

In the following theorem, we show that $Y$ cannot be a element of an optimal measurement set, unless  $Y$ is a redundant measurement, or a strong condition is satisfied: that $Y$ has a sufficient action whenever it is non-redundant.

\begin{theorem}[Action value of measurement sets containing the outcome can be improved]\label{thm:set}
 Let $Y(X_1, \cdots, X_s)$ be an $s$-dimensional Boolean function\lldelete{such that $Y \not\equiv 0$}. \lledit{Consider $Y\cup S$, the size-$B$ measurement set where $Y$ is non-redundant. Suppose there exists $\bar{s}\in \{0,1\}^{B-1}$ in the $Y$-relevant set with respect to $S$ such that $Y$ does not have a sufficient action on $\{S=\bar{s}\}$.}

 Then, there is a measurement $M^*$ such that 
 \begin{equation*}
     V\act(Y\cup S) < V\act(M^* \cup S).
 \end{equation*}
\end{theorem}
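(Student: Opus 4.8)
The plan is to localize the forward implication of Theorem~\ref{thm:main_single} to the single ``bad'' slice $\{S=\bar s\}$, while leaving $Y$ untouched on every other slice. The first step is to record that, because the measurement set $Y\cup S$ reveals $S$ exactly, the action value decomposes additively over the $2^{B-1}$ slices indexed by $s\in\{0,1\}^{B-1}$: the planner may choose a separate action rule on each event $\{S=s\}$. On a given slice, observing $Y$ splits it into $\{Y=1,S=s\}$, where the optimal action is $\emptyset$ (the outcome is already good, so acting only incurs cost), and $\{Y=0,S=s\}$, where a single best action $a^*_s$ is applied. Writing $a^*_{\bar s}=[X_{i^*}\leftarrow x]$ for the optimal action on the relevant slice (non-null precisely because $\bar s$ lies in the $Y$-relevant set), the contribution of slice $\bar s$ is
\begin{equation*}
 V_{\bar s}(Y\cup S)=\Pr(Y=0,S=\bar s,Y^{[X_{i^*}\leftarrow x]}=1)-c\cdot\Pr(Y=0,S=\bar s),
\end{equation*}
and this quantity is positive by relevance.

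Second, I would construct $M^*$ so that it behaves exactly like $Y$ off the slice and refines $Y$ on it. Concretely, set $M^*=Y$ on $\{S=s\}$ for every $s\neq\bar s$, and on $\{S=\bar s\}$ let $M^*=0$ on $\{Y=0,\,Y^{[X_{i^*}\leftarrow x]}=1\}$ and $M^*=1$ elsewhere; this is a legitimate Boolean function of the states because $Y$, $S$, and $Y^{[X_{i^*}\leftarrow x]}$ are all state-measurable. On every slice $s\neq\bar s$ the pair $(M^*,S)$ carries the same information as $(Y,S)$, so the optimal policies and the per-slice contributions coincide. On slice $\bar s$, observing $M^*=0$ certifies that $a^*_{\bar s}$ improves the outcome with probability one, so the planner applies it there and does nothing on $\{M^*=1,S=\bar s\}$; lower-bounding the latter contribution by zero gives
\begin{equation*}
 V_{\bar s}(M^*\cup S)\ \ge\ (1-c)\,\Pr(Y=0,S=\bar s,Y^{[X_{i^*}\leftarrow x]}=1).
\end{equation*}

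Third, I would subtract the two per-slice contributions. Since all other slices cancel, the entire gap reduces to
\begin{equation*}
 V\act(M^*\cup S)-V\act(Y\cup S)\ \ge\ c\,\Pr(Y=0,S=\bar s,Y^{[X_{i^*}\leftarrow x]}=0),
\end{equation*}
the intuition being that $Y\cup S$ pays the cost $c$ on the entire event $\{Y=0,S=\bar s\}$ even though $a^*_{\bar s}$ helps only on the sub-event $\{Y^{[X_{i^*}\leftarrow x]}=1\}$, whereas $M^*\cup S$ refuses to pay for the hopeless configurations.

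The crux, and the step I expect to be the main obstacle, is showing this residual probability is strictly positive, i.e.\ that the best single action $a^*_{\bar s}$ leaves some positive-probability $Y=0$ configuration unimproved on the slice. This is what the hypothesis ``$Y$ has no sufficient action on $\{S=\bar s\}$'' is meant to buy: no single do-action drives $Y$ to $1$ across all positive-probability configurations of the slice, so the maximizer $a^*_{\bar s}$ must fail on a set of positive measure, giving $\Pr(Y=0,S=\bar s,Y^{[X_{i^*}\leftarrow x]}=0)>0$, and with $c>0$ the strict inequality follows. The care needed here is that, unlike the full-support case of Theorem~\ref{thm:main_single}, on a proper sub-support (a slice with $B\ge2$) a do-action can improve every $Y=0$ configuration yet still violate Definition~\ref{def:suff_act_cond} by disturbing a $Y=1$ configuration whose image leaves the slice's support. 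One should therefore verify that the quantity actually driving the cost-saving term is the failure of $a^*_{\bar s}$ to improve the $Y=0$ part, and check that the no-sufficient-action condition as invoked delivers exactly this; this is the delicate point where the restricted-support setting departs from the single-measurement argument and must be handled explicitly.
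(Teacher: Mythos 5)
Your proposal is correct and follows essentially the same route as the paper: the same slice-wise decomposition of the action value over $\{S=s\}$, the same refined measurement (your $M^*$ coincides with the paper's, which sets $M^*=1$ iff $Y=1$ or the state lies in the unimproved set $X_{NI}$ on the slice $\{S=\bar s\}$), and the same cost-saving accounting yielding a gain of $c\cdot\Pr(Y=0,\,S=\bar s,\,Y^{[X_{i^*}\leftarrow x]}=0)$. The delicate point you flag at the end is genuine but is not resolved in the paper either: the paper simply asserts the gain is $c\cdot\Pr(X_{NI})>0$, implicitly reading Definition~\ref{def:suff_act_cond} as quantifying only over $Y=0$ configurations of the slice, so your observation that a non-sufficient action could in principle fail only by disturbing $Y=1$ states points to a looseness shared with the paper's own argument rather than a defect of your approach relative to it.
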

\begin{proof}
   By assumption, there exists $\bar{s} \in \{0,1\}^{B-1}$ such that $a^*(Y=0, S=\bar{s}) = [X_i \leftarrow x]$ and $[X_i \leftarrow x]$ is not a sufficient action for improving $Y$ on $\{S=\bar{s}\}$.

   Let $X_{NI}$ denote the set of state values $(x_1, \cdots, x_s)$ where $S=\bar{s}$ and the action $[X_i \leftarrow x]$ does not improve $Y$, that is, \begin{equation*}
    \Pr(Y^{[X_i\leftarrow x]}=1\mid X_1=x_1, \cdots, X_s = x_s)=0.
\end{equation*} Since $[X_i \leftarrow x]$ is not a sufficient action on $\{S=\bar{s}\}$, we must have $\Pr((X_1, \cdots, X_s)\in X_{NI}) > 0$.
    
    Construct a new measurement $M^*$ such that 
    \begin{equation*}
        M = \begin{cases}
            1 &\text{ if } Y=1 \text{ or } (X_1, \cdots, X_s) \in X_{NI} \\
            0 &\text{o.w.}
        \end{cases}
    \end{equation*}

    Compare the best action policy under $Y\cup S$ and $M^*\cup S$. The best action policy changes only on the set $\{(X_1, \cdots, X_s) \in X_{NI} \}$, where the planner now takes no action instead of $[X_i \leftarrow x]$. The action value is therefore improved by $\Pr((X_1, \cdots, X_s)\in X_{NI})\cdot c>0$. We have shown that the measurement set $M^* \cup S$ has strictly higher action value than $Y\cup S$. 
\end{proof}

In Theorem~\ref{thm:set}, recall that we assumed $Y$ does not have a sufficient action on some element of the $Y$-relevant set, which implies and is stronger than the condition that $Y$ does not have a sufficient action overall. The following example shows that this assumption is necessary. Consider a slightly modified outcome model from Section~\ref{sec:comp_ex}: there are two binary latent states where $Y = X_1\land X_2$, and the marginal distribution of $X_1, X_2$ is Bernoulli but we have $\Pr(X_1=X_2=0)=0$. Here, $Y$ does not have a sufficient action and yet $\{Y, X_1\land \neg X_2\}$ is an optimal measurement set (maximum action value among all measurement sets).

\section{Further Related Work}\label{sec:related}

\paragraph{Heterogeneous causal effects and policy evaluation} 
\citet{athey2016recursive,shalit2017estimating,wager2018estimation} have examined the estimation of heterogeneous treatment effects from observational data. Furthermore, the area of off-policy learning and optimization \citep{manski2004statistical,zhao2012estimating,dudik2014doubly, kallus2018policy, athey2021policy} studies average causal outcomes under personalized treatment assignment policies, exemplified in studies focusing on job training interventions \citep{kitagawa2018should, knaus2022heterogeneous}. The framework of policy optimization is an alternative framework for algorithmic decision making that precludes the need for outcome predictors and human-in-the-loop decision making; it requires data about treated and untreated outcomes, and the treatment policy under which data was collected. Typically, the estimation of heterogenous treatment effects is limited to scenarios where only a single treatment (either discrete or continuous) is considered, without delving into the problem of diagnosing multiple causal factors. Going beyond randomized controlled trials, adaptive interventions involving multiple assignment strategies has become increasingly popular in the clinical application domain \citep{collins2007multiphase,montoya2022efficient}.
Though we have similar goals of finding optimal personalized treatment assignments---called an ``action policy'' in the current work,  personalization in this line of work depends on given covariates, whereas the current model examines the choice of what covariates to measure under a measurement budget.

\paragraph{Recourse and strategic action in machine learning}
A rich literature has developed over recent years on the topic of \emph{recourse}---that is, how individuals subject to an adverse decision based on a machine learning model might change their feature values to achieve a more favorable decision in the future \citep{ustun2019actionable,verma2020counterfactual,ross2021learning, karimi2022survey}. 
The research emphasizes the need for explanations that highlight mutable and more easily changeable features to guide individual action \citep{joshi2019towards, barocas2020hidden, karimi2021algorithmic}. While this work shares a common motivation with the present paper---to help individuals to achieve desired outcomes rather than just predict likely outcomes---it differs in two crucial ways. First, the work on recourse is specifically focused on the actions that can be taken by decision subjects, whereas we are concerned with the actions available to a social planner who is generally seeking to achieve positive societal impact. Secondly, while recourse focuses on altering the decisions output by a machine learning model, we are concerned with actions that  affect the likelihood of the actual outcome of interest, not merely a model's predictions.

The growing body of research on \emph{strategic classification} aims to assess how decisions subjects might adapt their behavior in light of a machine learning model making decisions \citep{bruckner2011stackelberg,Hardt:2016:SC,kleinberg18investeffort,Hu2019disparate,Milli2019social,liu2020disparate}. This line of work explores the concept of gaming, where individuals manipulate input features to improve model predictions without necessarily affecting the underlying property. Prior work has demonstrated that limiting strategic behavior along these lines requires causal modeling \citep{miller2020strategic,shavit2020causal}.This area of work focuses on designing the right incentives within an ML model to cause decision subjects to behave as the social planner might like them to behave. In contrast, the current work is focused on designing the measurements that help the social planner achieve its interventional goals directly.

\section{Conclusion}

In this paper, we studied the gap between outcome prediction and intervention in a probabilistic graphical model of outcomes, states, actions and measurements. By distinguishing between the utility of a measurement for accurate prediction and for effective intervention, we show that outcome prediction almost never leads to an optimal measurement or an optimal measurement set for interventions.
Our result is framed theoretically at a general level, to provide a language for reasoning about predictions and actions beyond the specifics of any one domain.

On the practical front, this theoretical investigation contributes to the discourse around the actionability of risk prediction in the education domain and beyond \citep{liu2023reimagining}, by specifying the limited conditions under which single outcome or risk prediction is compatible with interventional goals, and may be of interest to applied machine learning practitioners broadly. Recent work by \citet{saxena2023rethinking} has highlighted the fraught multiplicity of predicted risk notions. Further research might look into designing and predicting multiple actionable risk factors that incorporate knowledge of available interventions, as has been implemented in a data-driven student success program at Georgia State University with positive results \citep{renick2020predictive}, deconstructing the time dimension of risk \citep{xing2019dropout}, as well as recognizing conditions under which one might decide not to construct or implement outcome predictors \citep{garcia2020no,wang2022against}.

\newpage

\bibliography{mybib}

\end{document}